\newtheorem{theorem}{Theorem}
\newcommand{\reals}{\mathbb{R}}
\newcommand{\E}{\mathbb{E}}
\newcommand{\bx}{\mathbf{x}}
\newcommand{\bw}{\mathbf{w}}
\newcommand{\bg}{\mathbf{g}}
\newcommand{\Ocal}{\mathcal{O}}
\newcommand{\Wcal}{\mathcal{W}}
\newcommand{\norm}[1]{\|#1\|}
\newcommand{\inner}[1]{\langle#1\rangle}
\newcommand{\secref}[1]{Sec.~\ref{#1}}
\renewcommand{\eqref}[1]{Eq.~(\ref{#1})}
\newcommand{\thmref}[1]{Thm.~\ref{#1}}
\begin{document}
%\twocolumn[
%\icmltitle{Stochastic Gradient Descent for Non-smooth Optimization: Convergence Results and Optimal Averaging Schemes}
\title{Stochastic Gradient Descent for Non-smooth Optimization: Convergence Results and Optimal Averaging Schemes}
\author{Ohad Shamir\\ohadsh@microsoft.com\\Microsoft Research, One Microsoft Way, Redmond, WA 98052, USA
\date{}
\and
Tong Zhang\\tzhang@stat.rutgers.edu\\Department of Statistics, Rutgers University, Piscataway NJ 08854, USA}

\maketitle
% You may provide any keywords that you
% find helpful for describing your paper; these are used to populate
% the "keywords" metadata in the PDF but will not be shown in the document
%\icmlkeywords{}

\begin{abstract}
Stochastic Gradient Descent (SGD) is one of the simplest and most popular stochastic optimization methods. While it has already been theoretically studied for decades, the classical analysis usually required non-trivial smoothness assumptions, which do not apply to many modern applications of SGD with non-smooth objective functions such as support vector machines.
In this paper, we investigate the performance of SGD \emph{without} such smoothness assumptions, as well as a running average scheme to convert the SGD iterates to a solution with optimal optimization accuracy. In this framework, we prove that after $T$ rounds, the suboptimality of the \emph{last} SGD iterate scales as $\Ocal(\log(T)/\sqrt{T})$ for non-smooth convex objective functions, and $\Ocal(\log(T)/T)$ in the non-smooth strongly convex case. To the best of our knowledge, these are the first bounds of this kind, and almost match the minimax-optimal rates obtainable by appropriate averaging schemes. We also propose a new and simple averaging scheme, which not only attains optimal rates, but can also be easily computed on-the-fly (in contrast, the suffix averaging scheme proposed in \citet{RakhShaSri12arxiv} is not as simple to implement). Finally, we provide some experimental illustrations.
%Finally, we discuss how some of our results can be shown to hold with high-probability, and provide some experimental illustrations.
\end{abstract}

\section{Introduction}

This paper considers one of the simplest and most popular stochastic optimization algorithms, namely Stochastic Gradient Descent (SGD). SGD can be used to optimize any convex function $F$ over a convex domain $\Wcal$, given access only to unbiased estimates of $F$'s gradients (or more generally, subgradients\footnote{Following a common convention, we still refer to the algorithm in this case as ``gradient descent''.}). This feature makes it very useful for learning problems, where our goal is to minimize generalization error based only on a finite sampled training set. Moreover, SGD is extremely simple and highly scalable, making it particularly suitable for large-scale learning problems.

The algorithm itself proceeds in rounds, and can be described in just a few lines: We initialize $\bw_1\in \Wcal$ (following common practice, we will assume $\bw_1=\mathbf{0}$). At round $t=1,2,\ldots$, we obtain a random estimate $\hat{\bg}_t$ of a subgradient $\bg_t\in \partial F(\bw_t)$ so that $\E \hat{\bg}_t = \bg_t$, and update the iterate $\bw_{t}$ as follows:
\[
\bw_{t+1} = \Pi_{\Wcal}(\bw_t-\eta_t \hat{\bg}_t),
\]
where $\eta_t$ is a suitably chosen step-size parameter, and $\Pi_{\Wcal}$ denotes projection on $\Wcal$.

In terms of theoretical analysis, SGD has been studied for decades (for instance, see \citet{KushnerYin03} and references therein), but perhaps surprisingly, there are still important gaps left in our understanding of this method. First of all, most classical results look at \emph{asymptotic} convergence rates, which do not apply to a fixed iteration budget $T$. In recent years, more attention has been devoted to non-asymptotic bounds (e.g., \citet{BachMoulines11}). However, these classical convergence bounds often make non-trivial smoothness assumptions on the function $F$, such as Lipschitz-continuity of the gradient or higher-order derivatives. In modern applications, these assumptions often do not hold. For example, if SGD is used to solve the support-vector machine optimization problem (with the standard non-smooth hinge-loss) on a finite training set, then the underlying objective function $F$ is essentially {\em non-smooth}, even at the optimal solution. In general, for machine learning applications $F$ may be non-smooth whenever one uses a non-smooth loss function, and thus a smoothness-based analysis is not appropriate.

Without assuming smoothness, most of the existing analysis has been carried out in the context of online learning - a more difficult setting than our stochastic setting, where the subgradients are assumed to be provided by an adversary. Using online-to-batch conversion, it is possible to show that after $T$ iterations, the average of the iterates, $(\bw_1+\ldots+\bw_T)/T$, has $\Ocal(\log(T)/T)$ optimization error for strongly-convex $F$ (see precise definition in \secref{sec:preliminaries}), and $\Ocal(1/\sqrt{T})$ error for general convex $F$ \cite{Zin03,HazAgKal07,HazanKa11}. However, \cite{RakhShaSri12arxiv} showed that simple averaging is provably suboptimal in a stochastic setting. Instead, they proposed averaging the last $\alpha T$ iterates of SGD (where $\alpha\in (0,1)$, e.g. $1/2$), and showed that this averaging scheme has an optimal $\Ocal(1/T)$ convergence rate. In comparison, in the non-smooth setting, there are $\Omega(1/\sqrt{T})$ and $\Omega(1/T)$ lower bounds for convex and strongly-convex problems, respectively \cite{agbarawa12}.

These results leave open several issues. First, they pertain to averaging significant parts of the iterates, although in practice averaging just over the last few iterates, or returning the last iterate $\bw_T$, often works quite well (e.g. \cite{ShaSiSreCo11}). Unless $F$ is smooth, the previous results cannot say much about the optimization error of individual iterates. For example, the results in \cite{RakhShaSri12arxiv} only imply an $\Ocal(1/\sqrt{T})$ convergence rate for the last iterate $\bw_T$ with strongly-convex functions, and we are not aware of any results for the last iterate $\bw_T$ in the general convex case. In fact to the best of our knowledge, even for the simpler (non-stochastic) gradient descent method (where $\hat{\bg}_t=\bg_t$), we do not know any existing results that can guarantee the performance of each individual iterate $\bw_T$. Second, the theoretically optimal suffix-averaging scheme proposed in \cite{RakhShaSri12arxiv} has some practical limitations, since it cannot be computed on-the-fly: unless we can store all iterates $\bw_1,\ldots,\bw_T$ in memory, one needs to know the stopping time $T$ beforehand, in order to know when to start computing the suffix average. In practice, $T$ is often not known in advance. This can be partially remedied with a so-called {\em doubling trick}, but it is still not a simple or natural procedure compared to just averaging all iterates, and the latter was shown to be suboptimal in \cite{RakhShaSri12arxiv}.

In this paper, we investigate the convergence rate of SGD and the averaging schemes required to obtain them, with the following contributions:
\begin{itemize}
    \item We prove that the expected optimization error of every individual iterate $\bw_T$ is $\Ocal(\log(T)/T)$ for strongly-convex $F$, and $\Ocal(\log(T)/\sqrt{T})$ for general convex $F$ without smoothness assumptions on $F$. These results show that the suboptimality of the last iterate is not much worse than the optimal rates obtainable by averaging schemes, and partially addresses an open problem posed in \cite{shamir12}. Moreover, the latter result is (to the best of our knowledge) the first finite-sample bound on individual iterates of SGD for non-smooth convex optimization. The proof relies on a technique to reduce results on averages of iterates to results on individual iterates, which was implicitly used in \cite{Zhang04} for a somewhat different setting.
    \item We improve the existing expected error bound on the suffix averaging scheme of \cite{RakhShaSri12arxiv}, from $\Ocal((1+\log(\frac{1}{1-\alpha}))/\alpha T)$ to $\Ocal(\log(\frac{1}{\min\{\alpha,1-\alpha\}})/T)$.
    \item We propose a new and very simple running average scheme, called \emph{polynomial-decay averaging}, and prove that it enjoys optimal rates of convergence. Unlike suffix-averaging, this new running average scheme can be easily computed on-the-fly.
    % \item We discuss how our averaging scheme bounds can be converted to bounds which hold with high probability (probability $1-\delta$ with logarithmic dependence on $\delta$), at the cost of slightly stronger assumptions and slightly worse dependence on the problem parameters.
    \item We provide a simple experimental study of the averaging schemes discussed in the paper.
\end{itemize}
We emphasize that although there exist other algorithms with $\Ocal(1/T)$ convergence rate in the strongly convex case (e.g. \cite{HazanKa11,OuGr12}), our focus in this paper is on the simple and widely-used SGD algorithm.

%Our paper is structured as follows. In \secref{sec:preliminaries}, we introduce the necessary definitions and notation used in the rest of the paper. In \secref{sec:last}, we formally state and prove our results on the convergence of individual SGD iterates. We build on these results in \secref{sec:averaging}, where we analyze averaging schemes, proving our results on suffix-averaging and polynomial-decay averaging.
%%In \secref{sec:highprob}, we sketch how some of our results (which are generally on the expected optimization error) can be converted to high-probability results.
%\secref{sec:experiments} provide some experiments demonstrating how the averaging schemes behave in practice. We close with a discussion in \secref{sec:discussion}.

\section{Preliminaries}\label{sec:preliminaries}

We use bold-face letters to denote vectors. We let $F$ denote a convex function over a (closed) convex domain $\Wcal$, which is a subset of some Hilbert space with an induced norm $\norm{\cdot}$. We assume that $F$ is minimized at some $\bw^*\in \Wcal$. Besides general convex $F$, we will also consider the important sub-class of strongly-convex functions. Formally, we say that a function $F$ is \emph{$\lambda$-strongly convex}, if for all $\bw,\bw'\in \Wcal$ and any subgradient $\bg$ of $F$ at $\bw$, it holds that
\[
F(\bw')\geq F(\bw)+\inner{\bg,\bw'-\bw}+\frac{\lambda}{2}\norm{\bw'-\bw}^2 ,
\]
where $\lambda>0$.
For a general convex function, the above inequality can always be satisfied with $\lambda=0$.

As discussed in the introduction, we consider the first-order stochastic optimization setting, where instead of having direct access to $F$, we only have access to an oracle, which given some $\bw\in\Wcal$, returns a random vector $\hat{\bg}$ such that $\E[\hat{\bg}]\in \partial F(\bw)$. Our goal is to use a bounded number $T$ of oracle calls, and compute some $\bar{\bw}\in \Wcal$ such that the optimization error, $F(\bar{\bw})-F(\bw^*)$, is as small as possible. It is well-known that this framework can be applied to learning problems (see for instance \cite{ShalShamSrebSri09b}): given a hypothesis class $\Wcal$ and a set of $T$ i.i.d. examples, we wish to find a predictor $\bw$ whose expected loss $F(\bw)$ is close to optimal over $\Wcal$. Since the examples are chosen i.i.d., the subgradient of the loss function with respect to any individual example can be shown to be an unbiased estimate of a subgradient of $F$. We will mostly consider bounds on the expected error (over the oracle's and algorithm's randomness) for simplicity, although it is possible to obtain high-probability bounds in some cases.

In terms of the step-size $\eta_t$ in the strongly-convex case, we will generally assume it equals $1/(\lambda t)$. We note that this is without much loss of generality, since if the step size is $c/\lambda t$ for some $c\geq 1$, then it is equivalent to taking step sizes $1/(\lambda' t)$ where $\lambda' := \lambda/c \leq \lambda$ is a lower-bound on the strong convexity parameter. Since any $\lambda$-strongly convex function is also $\lambda'$-strongly convex, we can use the analysis here to get upper bounds in terms of $\lambda'$, and if so desired, substitute $\lambda/c$ instead of $\lambda'$ in the final bound.

When we run SGD, we let $\hat{\bg}_t$ denote the random vector obtained at round $t$ (when we query at $\bw_t$), and let $\bg_t=\E[\hat{\bg}_t]$ denote the underlying subgradient of $F$. To facilitate our convergence bounds, we assume that $\E[\norm{\hat{\bg}_t}^2]\leq G^2$ for some fixed $G$. Also, when optimizing general convex functions, we will assume that the diameter of $\Wcal$, namely $\sup_{\bw,\bw'\in\Wcal}\norm{\bw-\bw'}$, is bounded by some constant $D$.

\section{Convergence of Individual SGD Iterates}\label{sec:last}

We begin by considering the case of strongly convex $F$, and prove the following bound on the expected error of any individual iterate $\bw_T$. In this theorem as well as later ones, we did not attempt to optimize constants.
\begin{theorem}\label{thm:strlast}
Suppose $F$ is $\lambda$-strongly convex, and that $\E[\norm{\hat{\bg}_t}^2]\leq G^2$ for all $t$. Consider SGD with step sizes $\eta_t=1/\lambda t$. Then for any $T>1$, it holds that
\[
\E[F(\bw_T)-F(\bw^*)] \leq \frac{17 G^2(1+\log(T))}{\lambda T}.
\]
\end{theorem}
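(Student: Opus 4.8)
The plan is to combine the standard ``averaged SGD'' analysis for strongly convex functions with a reduction that lets us relate the suboptimality of the last iterate $\bw_T$ to averages of (suffixes of) the iterates. First I would recall the classical per-step inequality for SGD with $\eta_t = 1/(\lambda t)$: using the update rule, the nonexpansiveness of $\Pi_\Wcal$, and $\lambda$-strong convexity, one gets for each $t$,
\[
\E[F(\bw_t) - F(\bw^*)] \;\le\; \frac{\lambda(t-1)}{2}\,\E\norm{\bw_t-\bw^*}^2 - \frac{\lambda t}{2}\,\E\norm{\bw_{t+1}-\bw^*}^2 + \frac{G^2}{2\lambda t}.
\]
Summing this telescopically over a block of indices $k, k+1, \dots, m$ gives a bound on $\sum_{t=k}^{m}\E[F(\bw_t)-F(\bw^*)]$ of the order $\frac{\lambda k}{2}\E\norm{\bw_k-\bw^*}^2 + \frac{G^2}{2\lambda}(\log(m/k)+1)$; since strong convexity also controls $\E\norm{\bw_k-\bw^*}^2 \le \frac{2}{\lambda}\E[F(\bw_k)-F(\bw^*)]$, and $\E[F(\bw_k)-F(\bw^*)] = O(\log(k)/(\lambda k))$ is the known bound on averaged-suffix SGD (which we may assume), we obtain that the \emph{average} of $F(\bw_t)-F(\bw^*)$ over any suffix block $[k,m]$ is $O\!\left(\frac{G^2}{\lambda}\cdot\frac{\log m}{m-k+1}\right)$ plus controlled lower-order terms.

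The heart of the argument is the reduction from block-averages to the single iterate $\bw_T$. Here I would use a dyadic/telescoping device: write
\[
F(\bw_T)-F(\bw^*) = \Big(\text{average of } F(\bw_t)-F(\bw^*) \text{ over } [k,T]\Big) + \frac{1}{T-k+1}\sum_{t=k}^{T}\big(F(\bw_T)-F(\bw_t)\big),
\]
and bound the second term by summing, for each $t<T$, the per-step increments $F(\bw_{j+1})-F(\bw_j) \le \inner{\bg_{j+1},\bw_{j+1}-\bw_j}$ (or the analogous quantity with the stochastic gradient), which are themselves controlled by $\eta_j\norm{\hat\bg_j}\norm{\hat\bg_{j+1}}$ in expectation, i.e. $O(G^2/(\lambda j))$. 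Choosing the block length geometrically (e.g. taking $k = \lceil T/2\rceil$, or iterating the halving) makes the average term $O(G^2\log T/(\lambda T))$ and the accumulated increment term also $O(G^2\log T/(\lambda T))$; carefully bookkeeping the constants yields the claimed factor $17$. This ``average-to-last'' trick is exactly the technique attributed to \citet{Zhang04} in the introduction.

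The main obstacle I anticipate is controlling the correction term $\frac{1}{T-k+1}\sum_{t=k}^T (F(\bw_T) - F(\bw_t))$ with the right logarithmic dependence: a naive bound on $F(\bw_T)-F(\bw_t)$ summed over all pairs would give an extra factor of $\log T$ or worse, so one must either use a single well-chosen split point together with the known suffix-average bound as a black box, or set up a clean induction over dyadic scales $T, T/2, T/4, \dots$ in which the error at scale $2s$ feeds into the bound at scale $s$ with only an additive $O(G^2/(\lambda s))$ loss — the geometric sum of these losses is what produces the single $\log T$ factor. A secondary technical point is that the increments should be handled in expectation using $\E\norm{\hat\bg_j}^2 \le G^2$ together with Cauchy–Schwarz (since $\E\norm{\hat\bg_j}\norm{\hat\bg_{j+1}} \le \sqrt{\E\norm{\hat\bg_j}^2}\sqrt{\E\norm{\hat\bg_{j+1}}^2}\le G^2$), and that one must be slightly careful that $\bw_{j+1}$ is not independent of $\hat\bg_{j+1}$ — but conditioning on the history before round $j+1$ resolves this cleanly.
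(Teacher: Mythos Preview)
Your high-level plan---reduce the last iterate to a suffix average plus a correction---is indeed the right shape, and it is what the paper does. The gap is in how you control the correction term. You propose to bound
\[
\frac{1}{T-k+1}\sum_{t=k}^{T}\bigl(F(\bw_T)-F(\bw_t)\bigr)
\]
by telescoping one-step increments and using $\E[F(\bw_{j+1})-F(\bw_j)]\le \E[\norm{\bg_{j+1}}\,\norm{\bw_{j+1}-\bw_j}]\le \eta_j G^2 = G^2/(\lambda j)$. But summing this over $j=t,\dots,T-1$ gives only $\E[F(\bw_T)-F(\bw_t)]\le (G^2/\lambda)\log(T/t)$, which for $t\ge T/2$ is of order $G^2/\lambda$---a constant independent of $T$. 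Averaging over $t\in[T/2,T]$ does not help: the correction term you obtain is $\Theta(G^2/\lambda)$, not $O(G^2\log T/(\lambda T))$. You are off by a full factor of $T$, and no dyadic iteration of the same decomposition rescues this, because the one-step increment bound is simply too coarse (it ignores that the function value also \emph{decreases} along the way).

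The paper closes this gap with a different device. Rather than bounding $F(\bw_T)-F(\bw_t)$ pairwise, it applies the basic SGD inequality \eqref{eq:beforecrucial} with the \emph{reference point} $\bw=\bw_{T-k}$ (which is measurable with respect to the past, so the martingale structure is preserved). This directly controls $\sum_{t=T-k}^T\E[F(\bw_t)-F(\bw_{T-k})]$: the boundary term $\tfrac{\lambda(T-k)}{2}\E\norm{\bw_{T-k}-\bw}^2$ vanishes, and the remaining terms are handled via $\E\norm{\bw_t-\bw_{T-k}}^2\le 2\E\norm{\bw_t-\bw^*}^2+2\E\norm{\bw_{T-k}-\bw^*}^2=O(G^2/(\lambda^2 T))$. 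Writing $S_k$ for the average of $\E[F(\bw_t)]$ over $t\in\{T-k,\dots,T\}$, this yields the clean recursion $\E[S_{k-1}]\le \E[S_k]+O\!\left(\frac{G^2}{\lambda kT}\right)$, which unrolls from $S_0=\E[F(\bw_T)]$ down to $S_{\lfloor T/2\rfloor}$ with only a single $\log T$ accumulated. The key idea you are missing is precisely this choice of comparison point inside the SGD recursion; the one-step increment bound is not a substitute for it.
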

\begin{proof}
The beginning of the proof is standard. By convexity of $\Wcal$, we have the following for any $\bw\in\Wcal$:
\begin{align*}
    &\E\left[\norm{\bw_{t+1}-\bw}^2\right] ~=~ \E[\norm{\Pi_{\Wcal}(\bw_{t}-\eta_t \hat{\bg}_t)-\bw}^2]\\
    &\leq~ \E\left[\norm{\bw_{t}-\eta_t \hat{\bg}_t-\bw}^2\right]\\
    &\leq~ \E\left[\norm{\bw_t-\bw}^2\right]-2\eta_t \E[\inner{\bg_t,\bw_t-\bw}]+\eta_t^2G^2.
\end{align*}
Let $k$ be an arbitrary element in $\{1,\ldots,\lfloor T/2 \rfloor\}$. Extracting the inner product, summing over all $t=T-k,\ldots,T$, and rearranging, we get
\begin{align}
&\sum_{t=T-k}^{T}\E[\inner{\bg_t,\bw_t-\bw}]
~\leq~ \frac{1}{2\eta_{T-k}}\E[\norm{\bw_{T-k}-\bw}^2]\notag\\
&+\sum_{t=T-k+1}^{T}\frac{\E[\norm{\bw_t-\bw}^2]}{2}\left(\frac{1}{\eta_t}-\frac{1}{\eta_{t-1}}\right)+\frac{G^2}{2} \sum_{t=T-k}^{T}\eta_t.\label{eq:beforecrucial}
\end{align}
By convexity of $F$, we can lower bound $\inner{\bg_t,\bw_t-\bw}$ by $F(\bw_t)-F(\bw)$. Plugging this in and substituting $\eta_t = 1/\lambda t$, we get
\begin{align}
&\E\left[\sum_{t=T-k}^{T} (F(\bw_t)-F(\bw))\right]
\leq \frac{\lambda(T-k)}{2}\E[\norm{\bw_{T-k}-\bw}^2]\notag\\
&+\frac{\lambda}{2}\sum_{t=T-k+1}^{T}\E[\norm{\bw_t-\bw}^2]
+\frac{G^2}{2\lambda}\sum_{t=T-k}^{T}\frac{1}{t}.\label{eq:subss}
\end{align}
Now comes the crucial trick: instead of picking $\bw=\bw^{*}$, as done in  standard analysis (\cite{HazAgKal07,RakhShaSri12arxiv}), we instead pick $\bw=\bw_{T-k}$. We also use the fact that $\E\left[\norm{\bw_t-\bw^{*}}^2\right] \leq \frac{4G^2}{\lambda^2 t}$ (\cite{RakhShaSri12arxiv}, Lemma 1), which implies that for any $t\geq T-k$,
\begin{align*}
&\E[\norm{\bw_t-\bw_{T-k}}^2] \\
\leq& 2\E\left[\norm{\bw_t-\bw^*}^2+\norm{\bw_{T-k}-\bw^*}^2\right]\\
\leq&~ \frac{8 G^2}{\lambda^2}\left(\frac{1}{t}+\frac{1}{T-k}\right)
~\leq~ \frac{16 G^2}{\lambda^2(T-k)}
~\leq~ \frac{32 G^2}{\lambda^2 T} .
\end{align*}
Plugging this back into \eqref{eq:subss}, we get
\[
\E\left[\sum_{t=T-k}^{T}(F(\bw_t)-F(\bw_{T-k}))\right]\leq\frac{16 G^2k}{\lambda T}
+\frac{G^2}{2\lambda}\sum_{t=T-k}^{T}\frac{1}{t}.
\]
Let $S_k = \frac{1}{k+1}\sum_{t=T-k}^{T}\E[F(\bw_t)]$ be the expected average value of the last $k+1$ iterates. The bound above implies that
\[
- \E[F(\bw_{T-k})] \leq -\E[S_k]+\frac{G^2}{2\lambda}\left(\frac{32}{T}+\sum_{t=T-k}^{T}\frac{1}{(k+1)t}\right).
\]
By the definition of $S_k$ and the inequality above, we have
\begin{align*}
&k \E[S_{k-1}] = (k+1) \E[S_k] - \E[F(\bw_{T-k})] \\
&\leq~
(k+1)\E[S_k] - \E[S_k]+\frac{G^2}{2\lambda}\left(\frac{32}{ T}+\sum_{t=T-k}^{T}\frac{1}{(k+1)t}\right),
\end{align*}
and dividing by $k$, implies
\begin{equation}\label{eq:srecurse}
\E[S_{k-1}] \leq \E[S_k] + \frac{G^2}{2\lambda}\left(\frac{32}{kT}+\sum_{t=T-k}^{T}\frac{1}{k(k+1)t}\right).
\end{equation}
Using this inequality repeatedly and summing from $k=1$ to $k=\lfloor T/2 \rfloor$, we have
\begin{align}
&\E[F(\bw_T)] = \E[S_0] \leq \E[S_{\lfloor T/2 \rfloor}]+\frac{16 G^2}{\lambda T}\sum_{k=1}^{\lfloor T/2 \rfloor}\frac{1}{k} \nonumber\\
&~+\frac{G^2}{2\lambda}\sum_{k=1}^{\lfloor T/2 \rfloor}\sum_{t=T-k}^{T}\frac{1}{k(k+1)t}. \label{eq:s0-bound}
\end{align}
It now just remains to bound these terms. $\E[S_{T/2}]$ is the expected average value of the last $\lfloor T/2 \rfloor$ iterates, which was already analyzed in (\cite{RakhShaSri12arxiv}, Theorem 5), yielding a bound of
\[
\E [S_{\lfloor T/2 \rfloor}] \leq F(\bw^*)+ \frac{10 G^2}{\lambda T}
\]
for $T>1$. Moreover, we have $\sum_{k=1}^{\lfloor T/2 \rfloor}(1/k) \leq 1+\log(T/2)$.
Finally, we have
\begin{align*}
&\sum_{k=1}^{\lfloor T/2 \rfloor}\sum_{t=T-k}^{T} \frac{1}{k(k+1)t}
\leq \sum_{k=1}^{\lfloor T/2 \rfloor} \frac{1}{k(T-k)} \\
=& \frac{1}{T}\sum_{k=1}^{\lfloor T/2 \rfloor}(\frac{1}{k}+\frac{1}{T-k})
\leq  (1+\log(T))/T.
\end{align*}
The result follows by substituting the above bounds into \eqref{eq:s0-bound} and simplifying for readability.
% Fuller derivation:
%\begin{align*}
%&\E[F(\bw_T)-F(\bw^*)] \\
%&\leq \left(\frac{4c'}{c}+c\right)\frac{G^2}{\lambda T}+\frac{4c'G^2(1+\log(T/2))}{\lambda c T}+\frac{cG^2(1+\log(T))}{2\lambda T}\\
%&\leq
%\left(\frac{8c'}{c}+2c+\frac{8c'(1+\log(T/2))}{c}+c(1+\log(T)\right)\frac{G^2}{2\lambda T}\\
%&\leq
%\left(\frac{8c'(2+\log(T/2))}{c}+c(3+\log(T)\right)\frac{G^2}{2\lambda T}\\
%&\leq
%\left(\frac{8c'(2+\log(T))}{c}+c(3+\log(T))\right)\frac{G^2}{2\lambda T}\\
%&\leq
%\left(\frac{8c'}{c}+c\right)\frac{2G^2(3+\log(T))}{2\lambda T}\\
%&\leq
%\left(\max\left\{\frac{32}{c}~,~\frac{4}{1-\frac{1}{2c}}\right\}+c\right)\frac{G^2(3+\log(T))}{\lambda T}.
%\end{align*}
\end{proof}

Using a similar technique, we can also get an individual iterate bound, in the case of a general convex function $F$ that may be non-smooth. We note that a similar technique was used in \cite{Zhang04}, but for a different algorithm (one with constant learning rate), and the result was less explicit.
\begin{theorem}\label{thm:convlast}
Suppose that $F$ is convex, and that for some constants $D,G$, it holds that $\E[\norm{\hat{\bg}_t}]\leq G^2$ for all $t$, and $\sup_{\bw,\bw'\in\Wcal}\norm{\bw-\bw'}\leq D$. Consider SGD with step sizes $\eta_t=c/\sqrt{t}$ where $c>0$ is a constant. Then for any $T>1$, it holds that
\[
\E[F(\bw_T)-F(\bw^*)] \leq \left(\frac{D^2}{c}+cG^2\right)\frac{2+\log(T)}{\sqrt{T}}.
\]
\end{theorem}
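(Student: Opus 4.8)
The plan is to mirror the reduction used in the proof of \thmref{thm:strlast}, replacing the strong-convexity ingredients by the diameter bound $D$ and the telescoping afforded by the $1/\sqrt{t}$ step sizes. First I would reproduce the standard one-step inequality $\E[\norm{\bw_{t+1}-\bw}^2]\leq \E[\norm{\bw_t-\bw}^2]-2\eta_t\E[\inner{\bg_t,\bw_t-\bw}]+\eta_t^2 G^2$ (which uses $\E[\norm{\hat{\bg}_t}^2]\leq G^2$), extract the inner product, and sum over $t=T-k,\ldots,T$ exactly as in \eqref{eq:beforecrucial}, for an arbitrary $k\in\{1,\ldots,\lfloor T/2\rfloor\}$. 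Using convexity to lower bound $\inner{\bg_t,\bw_t-\bw}$ by $F(\bw_t)-F(\bw)$ and substituting $\eta_t=c/\sqrt{t}$ (so $1/\eta_t=\sqrt{t}/c$), this yields an upper bound on $\E[\sum_{t=T-k}^T(F(\bw_t)-F(\bw))]$ in terms of the leading term $\frac{\sqrt{T-k}}{2c}\E[\norm{\bw_{T-k}-\bw}^2]$, the weighted sum $\frac{1}{2c}\sum_{t=T-k+1}^T(\sqrt{t}-\sqrt{t-1})\E[\norm{\bw_t-\bw}^2]$, and the step-size sum $\frac{cG^2}{2}\sum_{t=T-k}^T 1/\sqrt{t}$.

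Next comes the crucial trick, identical in spirit to \thmref{thm:strlast}: instead of taking $\bw=\bw^*$ I take $\bw=\bw_{T-k}$, which annihilates the leading term. The key difference from the strongly convex case is that here we no longer have the sharp decaying estimate $\E[\norm{\bw_t-\bw^*}^2]\leq 4G^2/(\lambda^2 t)$; instead I simply invoke the diameter bound $\E[\norm{\bw_t-\bw_{T-k}}^2]\leq D^2$. The virtue of the $c/\sqrt{t}$ schedule is that the weights $\sqrt{t}-\sqrt{t-1}$ telescope, so that $\frac{1}{2c}\sum_{t=T-k+1}^T(\sqrt{t}-\sqrt{t-1})D^2=\frac{D^2}{2c}(\sqrt{T}-\sqrt{T-k})$. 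This gives $\E[\sum_{t=T-k}^T(F(\bw_t)-F(\bw_{T-k}))]\leq \frac{D^2}{2c}(\sqrt{T}-\sqrt{T-k})+\frac{cG^2}{2}\sum_{t=T-k}^T 1/\sqrt{t}$.

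With this in hand I would set $S_k=\frac{1}{k+1}\sum_{t=T-k}^T\E[F(\bw_t)]$ and derive, by the same algebra that turned \eqref{eq:subss} into \eqref{eq:srecurse}, the recursion $\E[S_{k-1}]\leq \E[S_k]+\frac{1}{k(k+1)}\bigl(\frac{D^2}{2c}(\sqrt{T}-\sqrt{T-k})+\frac{cG^2}{2}\sum_{t=T-k}^T 1/\sqrt{t}\bigr)$. Unrolling from $k=1$ to $\lfloor T/2\rfloor$ gives $\E[F(\bw_T)]=\E[S_0]\leq \E[S_{\lfloor T/2\rfloor}]+(\text{sum of the two remainder terms})$, where the suffix average $\E[S_{\lfloor T/2\rfloor}]$ is controlled by the known $\Ocal(1/\sqrt{T})$ bound for averaging a constant fraction of the iterates of SGD in the general convex case (the analogue of the Theorem~5 input used in \thmref{thm:strlast}).

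Finally I would bound the two accumulated remainders. For the first I use $\sqrt{T}-\sqrt{T-k}=k/(\sqrt{T}+\sqrt{T-k})\leq k/\sqrt{T}$, so $\frac{1}{k(k+1)}(\sqrt{T}-\sqrt{T-k})\leq \frac{1}{(k+1)\sqrt{T}}$ and the sum over $k$ is $\Ocal((D^2/c)\log(T)/\sqrt{T})$. For the second, $k\leq T/2$ forces $T-k\geq T/2$, so each of the $k+1$ terms of $\sum_{t=T-k}^T 1/\sqrt{t}$ is at most $\sqrt{2/T}$, giving $\frac{1}{k(k+1)}\sum_{t=T-k}^T 1/\sqrt{t}\leq \frac{\sqrt{2}}{k\sqrt{T}}$, whose sum is again $\Ocal(cG^2\log(T)/\sqrt{T})$. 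Collecting terms and absorbing constants yields the stated $(\frac{D^2}{c}+cG^2)\frac{2+\log(T)}{\sqrt{T}}$. I expect the only real obstacle to be bookkeeping: verifying that replacing the sharp $1/t$ distance decay by the crude diameter bound $D^2$ still leaves both remainders at the $\log(T)/\sqrt{T}$ scale, which hinges entirely on the telescoping produced by the $1/\sqrt{t}$ step sizes and on the split $\frac{1}{k}+\frac{1}{T-k}$ staying harmonic for $k\leq T/2$.
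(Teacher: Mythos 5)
Your proposal is correct and follows essentially the same route as the paper: the one-step inequality, the key substitution $\bw=\bw_{T-k}$ combined with the diameter bound $D^2$ and the telescoping from $\eta_t=c/\sqrt{t}$, the $S_k$ recursion, and the harmonic sum over $k$. The only (immaterial) difference is that you unroll to $k=\lfloor T/2\rfloor$ and anchor at the half-suffix average, whereas the paper unrolls all the way to $k=T-1$ and anchors at the full average $\E[S_{T-1}]$, which it bounds directly from \eqref{eq:beforecrucial} with $\bw=\bw^*$; both anchors admit the needed $\Ocal(1/\sqrt{T})$ bound.
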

\begin{proof}
The proof begins the same as in \thmref{thm:strlast} (this time letting $k$ be an element in $\{1,\ldots,T-1\}$), up to \eqref{eq:beforecrucial}.
Instead of substituting $\eta_t = c/\lambda t$, we substitute $\eta_t = c/\sqrt{t}$, to get the, $\E[\norm{\bw_t-\bw}^2]$ by $D^2$, pick $\bw=\bw_{T-k}$ and slightly simplify to get
\begin{align*}
&\E\left[\inner{\bg_t,\bw_t-\bw_{T-k}}\right]\\
&~\leq \frac{D^2}{2c}\left(\sqrt{T}-\sqrt{T-k}\right)
+\frac{G^2}{2}\sum_{t=T-k}^{T}\frac{c}{\sqrt{t}}.
\end{align*}
By convexity, we can lower bound $\inner{\bg_t,\bw_t-\bw_{T-k}}$ by $F(\bw_t)-F(\bw_{T-k})$. Also, it is easy to verify (e.g. by integration) that $\sum_{t=T-k}^{T}\frac{1}{\sqrt{t}} \leq 2(\sqrt{T}-\sqrt{T-k-1})$, hence
\begin{align}
& \E\left[\sum_{t=T-k}^{T} (F(\bw_t)-F(\bw_{T-k})) \right] \notag \\
&\leq \left(\frac{D^2}{2c}+cG^2\right)\left(\sqrt{T}-\sqrt{T-k-1}\right)\notag\\
&= \left(\frac{D^2}{2c}+cG^2\right)\frac{k+1}{\sqrt{T}+\sqrt{T-k-1}}\notag\\
&\leq \left(\frac{D^2}{2c}+cG^2\right)\frac{k+1}{\sqrt{T}}.\label{eq:convbb}
\end{align}
As in the proof of \thmref{thm:strlast}, let $S_k = \frac{1}{k+1}\sum_{t=T-k}^{T}\E[F(\bw_t)]$ be the expected average value of the last $K+1$ iterates. The bound above implies that
\[
- \E[F(\bw_{T-k})] \leq -\E[S_k]+\frac{D^2/2c+cG^2}{\sqrt{T}}.
\]
By the definition of $S_k$ and the inequality above, we have
\begin{align*}
&k \E[S_{k-1}] = (k+1) \E[S_k] - \E[F(\bw_{T-k})] \\
&\leq~
(k+1)\E[S_k] - \E[S_k]+\frac{D^2/2c+cG^2}{\sqrt{T}},
\end{align*}
and dividing by $k$, implies
\[
\E[S_{k-1}] \leq \E[S_k] +\frac{D^2/2c+cG^2}{k\sqrt{T}}.
\]
Using this inequality repeatedly and by summing over $k=1,\ldots,T-1$, we have
\begin{equation}
\E[F(\bw_T)] = \E[S_0] \leq \E[S_{T-1}]+\frac{D^2/2c+cG^2}{\sqrt{T}}\sum_{k=1}^{T-1}\frac{1}{k}.\label{eq:convbbb}
\end{equation}
It now just remains to bound the terms on the right hand side. Using \eqref{eq:beforecrucial} with $k=T-1$ and $\bw=\bw^*$, and upper bounding the norms by $D$, it is easy to calculate that
\begin{align*}
&\E[S_{T-1}] - F(\bw^*) = \frac{1}{T}\E\left[\sum_{t=1}^{T}\E[F(\bw_t)-F(\bw^*)\right]\\
&\leq \left(\frac{D^2}{c}+cG^2\right)\frac{1}{\sqrt{T}}.
\end{align*}
Also, we have $\sum_{k=1}^{T-1}1/k \leq (1+\log(T))$. Plugging these upper bounds into \eqref{eq:convbbb} and simplifying for readability, we get the required bound.
\end{proof}

\section{Averaging Schemes}\label{sec:averaging}

The bounds shown in the previous section imply that individual iterates $\bw_T$ have $\Ocal(\log(T)/T)$ expected error in the strongly convex case, and $\Ocal(\log(T)/\sqrt{T})$ expected error in the convex case. These bounds are close but not the same as the minimax optimal rates, which are $\Ocal(1/T)$ and $\Ocal(1/\sqrt{T})$ respectively. In this section, we consider averaging schemes, which rather than return individual iterates, return some weighted combination of all iterates $\bw_1,\ldots,\bw_T$, attaining the minimax optimal rates. We mainly focus here on the strongly-convex case, since simple averaging of all iterates is already known to be optimal (up to constants) in the general convex case.

We first examine the case of \emph{$\alpha$-suffix averaging}, defined as the average of the last $\alpha T$ iterates (where $\alpha \in (0,1)$ is a constant, and $\alpha T$ is assumed to be an integer):
\[
\bar{\bw}_{T}^{\alpha} = \frac{1}{\alpha T}\sum_{t=(1-\alpha)T+1}^T \bw_t.
\]
In \cite{RakhShaSri12arxiv}, it was shown that this averaging scheme results in an optimization error of $\Ocal((1+\log(\frac{1}{1-\alpha}))/\alpha T)$, which is optimal in terms of $T$, but increases rapidly as we make $\alpha$ smaller. The following theorem shows a tighter upper bound of $\Ocal(\log(\frac{1}{\min\{\alpha,1-\alpha\}})/T)$, which implies we can be much more flexible in choosing $\alpha$. Besides being of independent interest, we will re-use this result in our proofs later on.
\begin{theorem}\label{thm:alpha}
Under the conditions of \thmref{thm:strlast}, and assuming $\alpha T$ is an integer, it holds that $\E[F(\bar{\bw}_T^\alpha)-F(\bw^*)]$ is at most
\[
\frac{17 G^2\left(1+\log\left(\frac{1}{\min\{\alpha,(1+1/T)-\alpha\}}\right)\right)}{\lambda T}.
\]
\end{theorem}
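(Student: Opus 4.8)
The plan is to reduce the statement to a bound on the quantity $S_k$ already appearing in the proof of \thmref{thm:strlast}. Recall that $S_{k} = \frac{1}{k+1}\sum_{t=T-k}^{T}\E[F(\bw_t)]$ is the expected average of the last $k+1$ iterates; taking $k = \alpha T - 1$ makes the averaging window exactly $\{(1-\alpha)T+1,\ldots,T\}$, so by Jensen's inequality $\E[F(\bar{\bw}_T^\alpha)] \leq S_{\alpha T - 1}$ and it suffices to bound $S_{\alpha T - 1}$. The factor $\min\{\alpha,(1+1/T)-\alpha\}$ in the claim signals a split into two regimes according to whether the averaging window is longer or shorter than half the run, i.e.\ whether $\alpha \geq 1/2$ or $\alpha \leq 1/2$, and I would treat these separately, the crossover being governed precisely by the inequality $\alpha \leq (1+1/T)-\alpha$.

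For the long-window regime $\alpha \geq 1/2$ (where $1/\alpha \leq 2$), I would avoid the recursion entirely and bound $S_{\alpha T-1}$ directly. Starting from \eqref{eq:beforecrucial} with the standard choice $\bw = \bw^*$ and $\eta_t = 1/\lambda t$, and inserting the bound $\E[\norm{\bw_t - \bw^*}^2]\leq 4G^2/\lambda^2 t$ from \cite{RakhShaSri12arxiv}, the first term collapses to $2G^2/\lambda$ while the two remaining terms combine into $\frac{5G^2}{2\lambda}\sum_{t=(1-\alpha)T+1}^T 1/t$ after lower-bounding $\inner{\bg_t,\bw_t-\bw^*}$ by $F(\bw_t)-F(\bw^*)$. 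Estimating the harmonic tail as $\sum_{t=(1-\alpha)T+1}^T 1/t \leq \frac{1}{T-k} + \log\frac{T}{T-k} \leq 1 + \log\frac{1}{(1+1/T)-\alpha}$ and dividing by the window length $\alpha T$ yields the claimed bound once $1/\alpha \leq 2$ is used; here $\min\{\alpha,(1+1/T)-\alpha\} = (1+1/T)-\alpha$, so this is exactly the right logarithm. As a side benefit, this argument reproves (with a clean constant) the $\alpha=1/2$ value that anchors the other regime.

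For the short-window regime $\alpha \leq 1/2$ I would instead telescope the recursion \eqref{eq:srecurse}. Summing $\E[S_{k-1}] \leq \E[S_k] + \frac{G^2}{2\lambda}\bigl(\frac{32}{kT} + \sum_{t=T-k}^T \frac{1}{k(k+1)t}\bigr)$ over $k = \alpha T,\ldots,\lfloor T/2\rfloor$ gives $\E[S_{\alpha T - 1}] \leq \E[S_{\lfloor T/2\rfloor}]$ plus an accumulated correction. The anchor $\E[S_{\lfloor T/2\rfloor}] \leq F(\bw^*) + 10G^2/\lambda T$ is exactly (\cite{RakhShaSri12arxiv}, Theorem 5). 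For the correction, the first piece is $\frac{32}{T}\sum_{k=\alpha T}^{\lfloor T/2\rfloor}1/k = \Ocal(\log(1/\alpha)/T)$, and in the second piece I would bound $\sum_{t=T-k}^T 1/t \leq (k+1)/(T-k)$ and then use $T-k \geq T/2$ (valid since $k \leq T/2$) to get $\frac{1}{k(T-k)} \leq 2/kT$, so this piece is also $\Ocal(\log(1/\alpha)/T)$. Since $\min\{\alpha,(1+1/T)-\alpha\}=\alpha$ here, adding the anchor to the correction gives the claimed form with $\log(1/\alpha)$.

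The main obstacle is not any single step but the bookkeeping required to make the two regimes meet at one clean expression with the single constant $17$: one must check that the $1/\alpha \leq 2$ slack in the long-window case and the crude harmonic estimates in the short-window case both stay inside this constant, and that the tail is estimated via $\log\frac{T}{T-k}$ (not $\log\frac{T}{T-k-1}$) so that the precise denominator $(1+1/T)-\alpha$ rather than $1-\alpha$ emerges. I would also verify the boundary window $\alpha T = \lfloor T/2\rfloor$ so the two regimes cover all integer windows with no gap, and note that $\alpha T \geq 1$ keeps the leftover $1/\alpha T \leq 1$ absorbable into the additive constant.
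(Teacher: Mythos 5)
Your proposal is correct and follows essentially the same route as the paper: for $\alpha T \leq \lfloor T/2\rfloor$ it telescopes the recursion \eqref{eq:srecurse} from $k=\alpha T$ up to $\lfloor T/2\rfloor$ anchored at $\E[S_{\lfloor T/2\rfloor}]\leq F(\bw^*)+10G^2/\lambda T$, and for larger $\alpha$ it falls back on the direct suffix-averaging analysis of \cite{RakhShaSri12arxiv} (which you simply re-derive from \eqref{eq:beforecrucial} with $\bw=\bw^*$ rather than cite). The constants check out in both regimes, so no gap.
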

\begin{proof}
Suppose first that $\alpha T \leq \lfloor T/2 \rfloor$. The proof is mostly identical to that of \thmref{thm:strlast}, except that instead of using \eqref{eq:srecurse} to bound $\E[S_0]$, we use it to bound $\E[S_{\alpha T-1}] = \frac{1}{\alpha T}\sum_{t=(1-\alpha)T+1}^{T}F(\bw_t)$, which by convexity upper bounds $F(\bar{\bw}_T^\alpha)$. We get:
\begin{align*}
&\E[S_{\alpha T-1}] \leq \E[S_{\lfloor T/2 \rfloor}]+\frac{16 G^2}{\lambda T}\sum_{k=\alpha T}^{\lfloor T/2 \rfloor}\frac{1}{k}\\
&~+\frac{G^2}{2\lambda}\sum_{k=\alpha T}^{\lfloor T/2 \rfloor}\sum_{t=T-k}^{T}\frac{1}{k(k+1)t},
\end{align*}
Using the same argument as in the proof of \thmref{thm:strlast}, and the fact that $\sum_{k=\alpha T }^{\beta T}\frac{1}{k} \leq 1+\log(\beta/\alpha)$ for any integers $\alpha T,\beta T$ that are no larger than $T$, we can obtain the upper bounds
\begin{align*}
\E[S_{\lfloor T/2 \rfloor} \leq& F(\bw^*)+10 G^2/\lambda T \\
\sum_{k=\alpha T}^{\lfloor T/2 \rfloor}\frac{1}{k} \leq & 1+\log(1/2\alpha)
\end{align*}
and
\begin{align*}
& \sum_{k=\alpha T}^{\lfloor T/2 \rfloor}\sum_{t=T-k}^{T}\frac{1}{k(k+1)t}
~\leq~ \frac{1}{T}\sum_{k=\alpha T}^{\lfloor T/2 \rfloor}(\frac{1}{k}+\frac{1}{T-k}) \\
\leq& \frac{1}{T}\left((1+\log(1/2\alpha))+(1+\log(2(1-\alpha)))\right) \\
\leq& \frac{1}{T}\left(2+\log(1/\alpha)\right) .
\end{align*}
Using the above estimates, with some simplifications for readability, we get that $\E[F(\bar{\bw}_T^\alpha)-F(\bw^*)]$ is at most
\begin{equation}\label{eq:upp1}
17\left(1+\log\left(\frac{1}{\alpha}\right)\right)\frac{G^2}{\lambda T}.
\end{equation}
This analysis assumed $\alpha T \leq \lfloor T/2 \rfloor$. If $\alpha$ is larger, we can use the existing analysis (\cite{RakhShaSri12arxiv}, Theorem 5), and get that $\E[F(\bar{\bw}_T^\alpha)-F(\bw^*)]$ is at most
\begin{equation}\label{eq:upp2}
\left(4+5\log\left(\frac{1}{1+1/T -\alpha}\right)\right)\frac{G^2}{\lambda T}.
\end{equation}
Combining \eqref{eq:upp1} and \eqref{eq:upp2} with a uniform upper bound which holds for all $\alpha$, we get the required bound.
\end{proof}
%Fuller derivation:
%If $\alpha T=1$, then $\sum_{k=\alpha T}^{\beta T}\frac{1}{k} \leq 1+\log(\beta T) = 1+\log(\beta/\alpha)$. Else $\alpha T\geq 2$, and
%\begin{align*}
%&\sum_{k=\alpha T}^{\beta T}\frac{1}{k} \leq
%\int_{\alpha T-1}^{\beta T}\frac{1}{k} dk\\
%&\leq \log(\beta T)-\log(\alpha T - 1) = \log\left(\frac{\beta}{\alpha - \frac{1}{T}}\right),
%\end{align*}
%and since $\alpha T \geq 2$, it follows that $1/T \leq \alpha/2$, and plugging in gives a $\log(2\beta/\alpha)\leq 1+\log(\beta/\alpha)$ upper bound. In any case, we get a $1+\log(\beta/\alpha)$ upper bound.

We note that in the general convex case without assuming strong convexity,
one can use an analogous proof to show an upper bound of order $\log(1/\alpha)/\sqrt{T}$ for $\alpha$-suffix averaging. In contrast, existing techniques only imply a bound of order $1/\sqrt{\alpha T}$.

As discussed in the introduction, a limitation of suffix averaging is that unless we can store all iterates in memory, it requires us to guess the stopping time $T$ in advance. For example, if we do $1/2$-suffix averaging, we need to ``know'' when we got to iterate $T/2$ and should start averaging. In practice, the stopping time $T$ is often not known in advance and is determined empirically (e.g. till satisfactory performance is obtained). One way to handle this is to decide in advance on a fixed schedule of stopping times $T$ (e.g. $T_0,2T_0,2^2T_0,2^3T_0,\ldots$ for some $T_0$) and maintain suffix-averages only for those times. However, this is still not very flexible. In contrast, maintaining the average of all iterates up to time $t$ can be done on-the-fly: we initialize $\bar{\bw}_1=\bw_1$, and for any $t>1$, we let
\begin{equation}\label{eq:stdav}
\bar{\bw}_t = \left(1-\frac{1}{t}\right)\bar{\bw}_{t-1}+\frac{1}{t}\bw_t.
\end{equation}
Unfortunately, returning the average of all iterates as in \eqref{eq:stdav}
is provably suboptimal and can harm performance \cite{RakhShaSri12arxiv}. Alternatively, we can easily maintain and return the current iterate $\bw_t$, but we only have a suboptimal $\Ocal(\log(t)/t)$ bound for it.

In the following, we analyze a new and very simple running average scheme, denoted as \emph{polynomial-decay averaging}, and show that it combines the best of both worlds: it can easily be computed on the fly, and it gives an optimal rate. It is parameterized by a number $\eta \geq 0$, which should be thought of as a small constant (e.g. $\eta=3$), and the procedure is defined as follows: $\bar{\bw}_1^{\eta} = \bw_1$, and for any $t>1$,
\begin{equation}\label{eq:expav}
\bar{\bw}_t^{\eta} = \left(1-\frac{\eta+1}{t+\eta}\right)\bar{\bw}_{t-1}^\eta+\frac{\eta+1}{t+\eta}\bw_t.
\end{equation}
For $\eta=0$, this is exactly standard averaging (see \eqref{eq:stdav}), whereas $\eta>0$ reduces the weight of earlier iterates compared to later ones. Moreover, $\bar{\bw}_t^{\eta}$ can be computed on-the-fly, just as easily as computing a standard average.

We note that after this paper was accepted for publication, a similar averaging scheme was independently proposed and studied in \cite{LaJuSchBach12}. Compared to our method, they consider a slightly different step-size and a specific choice of $\eta=1$, using a more direct proof technique tailored to this case.

An analysis of our averaging scheme is provided in the theorem below.
\begin{theorem}\label{thm:polydec}
Suppose $F$ is $\lambda$-strongly convex, and that $\E[\norm{\hat{\bg}_t}^2]\leq G^2$ for all $t$. Consider SGD initialized with $\bw_1$ and step-sizes $\eta_t=1/\lambda t$. Also, let $\eta \geq 1$ be an integer. Then $\E\left[F(\bw_T^{\eta})-F(\bw^*)\right]$ is at most
\[
58\left(1+\frac{\eta}{T}\right)\left(\eta(\eta+1)+\frac{(\eta+0.5)^3(1+\log(T))}{T}\right)\frac{G^2}{\lambda T}
\]
\end{theorem}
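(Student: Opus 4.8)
The plan is to make the running-average weights explicit and then sum the basic SGD inequality against them. Unrolling the recursion \eqref{eq:expav} and using that $1-\frac{\eta+1}{t+\eta}=\frac{t-1}{t+\eta}$, the retained factors telescope, and one finds that $\bar{\bw}_T^{\eta}=\sum_{s=1}^{T}w_{s,T}\,\bw_s$ with $w_{s,T}=\frac{(\eta+1)\prod_{i=0}^{\eta-1}(s+i)}{\prod_{i=0}^{\eta}(T+i)}$, where I use that $\eta$ is a positive integer. These weights are nonnegative, sum to $1$ (each $\bar{\bw}_t^{\eta}$ is a convex combination by construction), and are \emph{increasing} in $s$, so they emphasize the later iterates; this emphasis is exactly what should let a running average recover the optimal rate. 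By convexity of $F$ and Jensen's inequality, $\E[F(\bar{\bw}_T^{\eta})-F(\bw^*)]\le\sum_{s=1}^{T}w_{s,T}\,\E[F(\bw_s)-F(\bw^*)]$, so it suffices to bound this weighted suboptimality.

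For the engine I would reuse the per-step inequality underlying \thmref{thm:strlast}: combining nonexpansiveness of the projection with $\lambda$-strong convexity gives $2\eta_t\,\E[F(\bw_t)-F(\bw^*)]\le(1-\lambda\eta_t)\,\E\norm{\bw_t-\bw^*}^2-\E\norm{\bw_{t+1}-\bw^*}^2+\eta_t^2 G^2$. After substituting $\eta_t=1/\lambda t$, multiplying by $w_{t,T}$, and summing over $t$, the squared-distance terms do not telescope exactly because the weights vary with $t$; an Abel rearrangement converts them into $\frac{\lambda}{2}\sum_t (t-1)\,(w_{t,T}-w_{t-1,T})\,\E\norm{\bw_t-\bw^*}^2$ (dropping a nonpositive boundary term), where the increment is $w_{t,T}-w_{t-1,T}=\frac{\eta(\eta+1)\prod_{i=0}^{\eta-2}(t+i)}{\prod_{i=0}^{\eta}(T+i)}$. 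This is exactly the step at which \thmref{thm:alpha} may instead be invoked: since $\bar{\bw}_T^{\eta}$ is a convex combination of the suffix averages $\frac{1}{T-s+1}\sum_{t=s}^{T}\bw_t$ with mixing weights $(T-s+1)(w_{s,T}-w_{s-1,T})$, the suffix bound controls each piece and the two routes coincide.

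The remaining work is to evaluate two families of sums. The distance terms are handled by $\E\norm{\bw_t-\bw^*}^2\le 4G^2/(\lambda^2 t)$ (\cite{RakhShaSri12arxiv}, Lemma~1), after which the polynomial weight-sums collapse through the hockey-stick identity $\sum_{t=1}^{T}\prod_{i=0}^{\eta-2}(t+i)=\frac{1}{\eta}\prod_{i=0}^{\eta-1}(T+i)$; together with the normalizer $\prod_{i=0}^{\eta}(T+i)$ and the explicit $\eta(\eta+1)$ in the increment, this produces the leading $\eta(\eta+1)\frac{G^2}{\lambda T}$ term and the $1+\eta/T$ prefactor. The remaining $G^2$ terms contribute $\frac{G^2}{2\lambda}\sum_t w_{t,T}/t$; bounding the products $\prod_{i=0}^{\eta-1}(t+i)$ by a power $(t+\Theta(\eta))^{\eta}$ via AM--GM and estimating the leftover harmonic sum by $1+\log T$ is what generates the lower-order $(\eta+0.5)^3(1+\log T)/T$ correction.

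The main obstacle, as in \thmref{thm:strlast}, is the endpoint bookkeeping of the weighted sum. Near $s=1$ the corresponding suffix average is essentially the full average and near $s=T$ it is essentially the last iterate, and both regimes carry a $\log(T)$ penalty (the $\min\{\alpha,(1+1/T)-\alpha\}$ factor of \thmref{thm:alpha}, equivalently the harmonic sums in \eqref{eq:srecurse}). The point of the polynomially-growing weights is that their increments $\propto\prod_{i=0}^{\eta-2}(t+i)$ place only $\Ocal(1/T)$-order mass in these bad regions, so the $\log(T)$ contributions are demoted to the lower-order term while the leading $\Ocal(1/T)$ rate survives. I expect essentially all the effort to lie in tracking the $\eta$-dependent constants through the hockey-stick and AM--GM steps rather than in any single clever inequality; since the authors state they did not optimize constants, the final $\eta(\eta+1)$ and $(\eta+0.5)^3$ are surely loose, and a tighter accounting of the very same sums yields an $\Ocal(\eta/T)$ leading constant.
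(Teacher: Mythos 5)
Your proposal is sound, and it in fact contains \emph{two} workable arguments, one of which is the paper's. Your ``route (b)'' --- Abel-summing the weights so that $\bar{\bw}_T^{\eta}$ becomes a convex combination of suffix averages with mixing weights $(T-s+1)(w_{s,T}-w_{s-1,T})$, then invoking \thmref{thm:alpha} on each suffix --- is precisely the paper's proof: it sets $S_k'=\sum_{t=T-k}^T(F(\bw_t)-F(\bw^*))$, writes $F(\bar{\bw}_T^\eta)-F(\bw^*)\le\sum_t(\alpha_t-\alpha_{t-1})S'_{T-t}$, and bounds each $\frac{1}{T-t+1}S'_{T-t}$ by $\frac{17G^2}{\lambda T}\log\bigl(Te/\min\{t,T-t+1\}\bigr)$; your closed form for the weights and for the increment $\eta(\eta+1)\prod_{i=0}^{\eta-2}(t+i)\big/\prod_{i=0}^{\eta}(T+i)$ matches the paper's $\alpha_t-\alpha_{t-1}$ exactly. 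Your ``route (a)'' --- weighting the per-step inequality directly and Abel-rearranging the squared-distance terms --- is a genuinely different argument (it is essentially the ``more direct proof technique'' the paper attributes to Lacoste-Julien et al.), and it is actually stronger: combining $\E\norm{\bw_t-\bw^*}^2\le 4G^2/\lambda^2 t$ with the hockey-stick identity, both the distance term and the noise term $\frac{G^2}{2\lambda}\sum_t w_{t,T}/t$ come out as $\Ocal(\eta G^2/\lambda T)$ with \emph{no} logarithmic correction at all, which of course still implies the stated bound. Two small bookkeeping points: the two routes do not ``coincide'' --- the $(\eta+0.5)^3(1+\log T)/T$ correction in the theorem arises in the paper from the endpoint $\log$ penalties of the suffix bound (the sums $A,B,C$ in the paper's proof), not from $\sum_t w_{t,T}/t$ via AM--GM as you suggest, and in route (a) that sum produces no $\log$; also ``multiplying by $w_{t,T}$'' should be multiplying by $\lambda t\,w_{t,T}/2$ so that the left-hand side becomes $w_{t,T}\,\E[F(\bw_t)-F(\bw^*)]$. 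Neither slip affects correctness of the plan.
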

The assumption that $\eta$ is an integer is merely for simplicity. Also, we made no effort to optimize the constants, which can be easily improved for specific choices of $\eta$ (see the proof as well as the analysis in \cite{LaJuSchBach12}).
\begin{proof}
We can rewrite the recursion as
\[
\bar{\bw}_t^{\eta}=\frac{t-1}{t+\eta}\bar{\bw}_{t-1}^\eta+\frac{\eta+1}{t+\eta}\bw_t
\]
 for $t \geq 1$ with $\bar{\bw}_0^\eta=0$.
Unwrapping the recursion, we have that for any $T \geq 1$, $\bar{\bw}_T^{\eta} = \sum_{t=1}^{T} \alpha_t \bw_t,$, where
\[
\alpha_t = \frac{\eta+1}{t+\eta}\prod_{j=t+1}^{T}\frac{j-1}{j+\eta} ,
\]
and at $t=T$, the convention that $\prod_{j=T+1}^{T}((j-1)/(j+\eta))=1$ is used.

We now denote $F'(\bw)=F(\bw)-F(\bw^*)$. Since $\bar{\bw}_T^{\eta}$ is a weighted average of $\bw_1,\ldots,\bw_T$, where the weights $\alpha_t$ sum up to be $1$,
it follows by the convexity of $F$ and Jensen's inequality that $F'\left(\bar{\bw}_T^{\eta}\right) \leq \sum_{t=1}^{T}\alpha_t F'(\bw_t)$.

Let $S_k' = \sum_{t=T-k}^{T} F'(\bw_t)$, and $\alpha_0=0$, then we have
\begin{equation}\label{eq:subsfs}
F'\left(\bar{\bw}_T^{\eta}\right) \leq \sum_{t=1}^{T} (\alpha_t-\alpha_{t-1}) S'_{T-t} .
\end{equation}
It is not difficult to check that for all $t \geq 1$:
\begin{align}
\alpha_t - \alpha_{t-1} =& \frac{\eta (\eta+1)}{(t-1+\eta)(t+\eta)} \prod_{j=t+1}^{T}\frac{j-1}{j+\eta} \notag\\
= & \frac{\eta (\eta+1)}{T(T+1)} \prod_{j=t}^{T+1}\frac{j}{j-1+\eta} \notag\\
\leq &
\begin{cases}
\frac{\eta (\eta+1)}{T(T+1)} \left( \frac{t-2+\eta}{T+\eta} \right)^{\eta-1} & \text{if } t \leq T+2 -\eta \\
\frac{\eta (\eta+1)}{T(T+1)}& \text{otherwise}
\end{cases}.\label{eq:twocases}
\end{align}
Let us suppose first that $\eta \geq 2$. In that case, we can upper bound the above by
\[
\frac{\eta (\eta+1) (t+\eta)}{T(T+1)(T+2)}.
\]
As to $S'_{T-t}$ in \eqref{eq:subsfs}, note that the upper bound proof of \thmref{thm:alpha} equally applies to $\frac{1}{T-t+1}S'_{T-t}$. Using this bound and substituting in \eqref{eq:subsfs}, we obtain
\begin{align}
F'&\left(\bar{\bw}_T^{\eta}\right)\notag\\
\leq& \sum_{t=1}^T (\alpha_t-\alpha_{t-1})
(T-t+1) \frac{17 G^2 \log\left(\frac{T e}{\min\{t,T-t+1\}}\right)}{\lambda T} \notag\\
\leq & \sum_{t=1}^{\lceil T/2\rceil} \frac{2\eta (\eta+1)(t+\eta)}{T(T+1)(T+2)}(T+\eta)
\frac{17 G^2 \log\left(T e/t\right)}{\lambda T} \notag\\
\leq & \frac{34 G^2 \eta (\eta+1) (T+\eta)}{\lambda T^2(T+1)(T+2)}  (A + B + C) , \label{eq:ABC}
\end{align}
where
\[
A = \sum_{t=1}^{\lceil T/2\rceil} \eta \log(Te/t)
\leq \eta \frac{T+1}{2} \log( T e) ,
\]
and
\[
B = \sum_{t=1}^{\lceil T/2\rceil} t \log(Te)
\leq 0.5 (\lceil T/2 \rceil)(\lceil T/2 \rceil +1) \log(T e) ,
\]
and
\begin{align*}
C\leq&- \sum_{t=1}^{\lceil T/2\rceil} t \log(t)
\leq - \int_{t=1}^{\lceil T/2\rceil} t \log(t) d t \\
=&
- \left[ 0.5 t^2 \log t - 0.25 t^2 \right] \big|_{1}^{\lceil T/2 \rceil} \\
=&
- 0.5 \lceil T/2\rceil^2 \log (T/2)
+ 0.25 \lceil T/2 \rceil^2 - 0.25 .
\end{align*}
Therefore we have
\begin{align*}
&A+ B + C  \\
\leq &
(\eta+0.5) \frac{T+1}{2} \log( T e) \\
& \quad + 0.5 \frac{(T+1)^2}{4}\log( 2 e^{1.5}) -0.25 \\
\leq & (\eta+0.5) \frac{T+1}{2} \log( T e) + 0.5 (T+1)(T+2) .
\end{align*}
Plugging this estimate into \eqref{eq:ABC} and simplifying, we obtain an upper bound on $\E\left[F(\bw_T^{\eta})-F(\bw^*)\right]$ of the form
\begin{equation}\label{eq:eta2}
17\left(1+\frac{\eta}{T}\right)\left(\eta(\eta+1)+\frac{(\eta+0.5)^3(1+\log(T))}{T}\right)\frac{G^2}{\lambda T}.
\end{equation}
It remains to treat the case $\eta=1$. In that case, the upper bound on $\alpha_t-\alpha_{t-1}$ in \eqref{eq:twocases} becomes
\[
\alpha_{t}-\alpha_{t-1} \leq \frac{2}{T(T+1)},
\]
and using the same derivation as before, we get that
\[
F'\left(\bar{\bw}_T^{\eta}\right) \leq \sum_{t=1}^{\lceil T/2\rceil} \frac{68 G^2 \log(Te/t)}{\lambda T^2}.
\]
By an integration calculation, it is easy to verify that
\begin{align*}
&\sum_{t=1}^{\lceil T/2\rceil}\log(Te/t) \leq \left\lceil \frac{T}{2} \right\rceil \log(Te)-\int_{t=1}^{\lceil T/2 \rceil} \log(t) dt \\
&= \left\lceil \frac{T}{2} \right\rceil \log(Te)-\left[ t \log(t)-t\right]\big|_{1}^{\lceil T/2 \rceil}\\
&\leq \left\lceil \frac{T}{2} \right\rceil \left(2+\log(2)\right)-1.
\end{align*}
Plugging it in, we get an upper bound of
\[
\frac{68G^2}{\lambda T}\frac{\lceil T/2 \rceil(2+\log(2))-1}{T},
\]
which for any $T\geq 1$ is at most $116G^2/\lambda T$. The stated result follows by combining this bound (for $\eta=1$) and \eqref{eq:eta2} (for $\eta\geq 2$), increasing the numerical constant in \eqref{eq:eta2} to obtain a uniform bound which holds for all choices of $\eta$.
\end{proof}
Note that for a constant $\eta$, the bound is essentially optimal.
We end by noting that using an identical proof technique, it holds in the case of general convex $F$ (with assumptions similar to \thmref{thm:convlast}) that
\[
\E\left[F(\bw_T^{\eta})-F(\bw^*)\right] \leq \Ocal\left(\frac{\eta(D^2/c+cG^2)}{\sqrt{T}}\right),
\]
this implies that polynomial-decay averaging is also optimal (up to constants) in the general convex case.

\section{Experiments}\label{sec:experiments}

In this section, we study the behavior of the polynomial-decay averaging scheme on a few strongly-convex optimization problems. We chose the same $3$ binary classification datasets ((\textsc{ccat},\textsc{cov1} and \textsc{astro-ph}) and experimental setup as in \cite{RakhShaSri12arxiv}. For each dataset $\{\bx_i,y_i\}_{i=1}^{m}$, we ran SGD on the support vector machine optimization problem
\[
F(\bw) = \frac{\lambda}{2}\norm{\bw}^2 + \frac{1}{m}\sum_{i=1}^{m}\max\{0,1-y_i\inner{\bx_i,\bw}\},
\]
with the domain $\Wcal=\reals^d$, where the stochastic gradient given $\bw_t$ was computed by taking a single randomly drawn training example $(\bx_i,y_i)$ and computing the gradient with respect to that example, i.e. $\hat{\bg}_t = \lambda \bw_t - \mathbf{1}_{y_i\inner{\bx_i,\bw_t}\leq 1}y_i \bx_i$. All algorithms were initialized at $\bw_1=0$. Following previous work, we chose $\lambda=10^{-4}$ for \textsc{ccat}, $\lambda=10^{-6}$ for \textsc{cov1}, and $\lambda=5\times 10^{-5}$ for \textsc{astro-ph}. The $\eta$ parameter of polynomial-decay averaging was set to $3$. For comparison, besides polynomial-decay averaging, we also ran suffix averaging with $\alpha=1/2$, and simple averaging of all iterates. The results are reported in the figure below. Each graph is a log-log plot representing the training error on one dataset over $10$ repetitions, as a function of the number of iterations. We also experimented on the test set provided with each dataset, but omit the results as they are very similar.

The graphs below clearly indicate that polynomial-decay averaging work quite well. Achieving the best or almost-best performance in all cases. Suffix averaging performs  performs similarly, although as discussed earlier, it is not as amenable to on-the-fly computation. Compared to these schemes, a simple average of all iterates is significantly suboptimal, matching the results of \cite{RakhShaSri12arxiv}.

\section{Discussion}\label{sec:discussion}

In this paper, we investigated the convergence behavior of SGD, and the averaging schemes required to obtain optimal performance. In particular, we considered polynomial-decay averaging, which is as simple to compute as standard averaging of all iterates, but attains better performance theoretically and in practice. We also extended the existing analysis of SGD by providing new finite-sample bounds on individual SGD iterates, which hold without any smoothness assumptions, for both convex and strongly-convex problems.
%Finally, we provided new bounds for suffix averaging and outlined how some of our results on the expected optimization error can be converted to high-probability bounds.
Finally, we provided new bounds for suffix averaging. While we focused on standard gradient descent, our techniques can be extended to the more general mirror descent framework and non-Euclidean norms.

An important open question is whether the $\Ocal(\log(T)/T)$ rate we obtained on the individual iterate $\bw_T$, for strongly-convex problems, is tight.
\begin{figure}[!ht]
\begin{center}
\includegraphics[trim = 0cm 0.5cm 0.5cm 0.5cm, clip=true, scale=1]{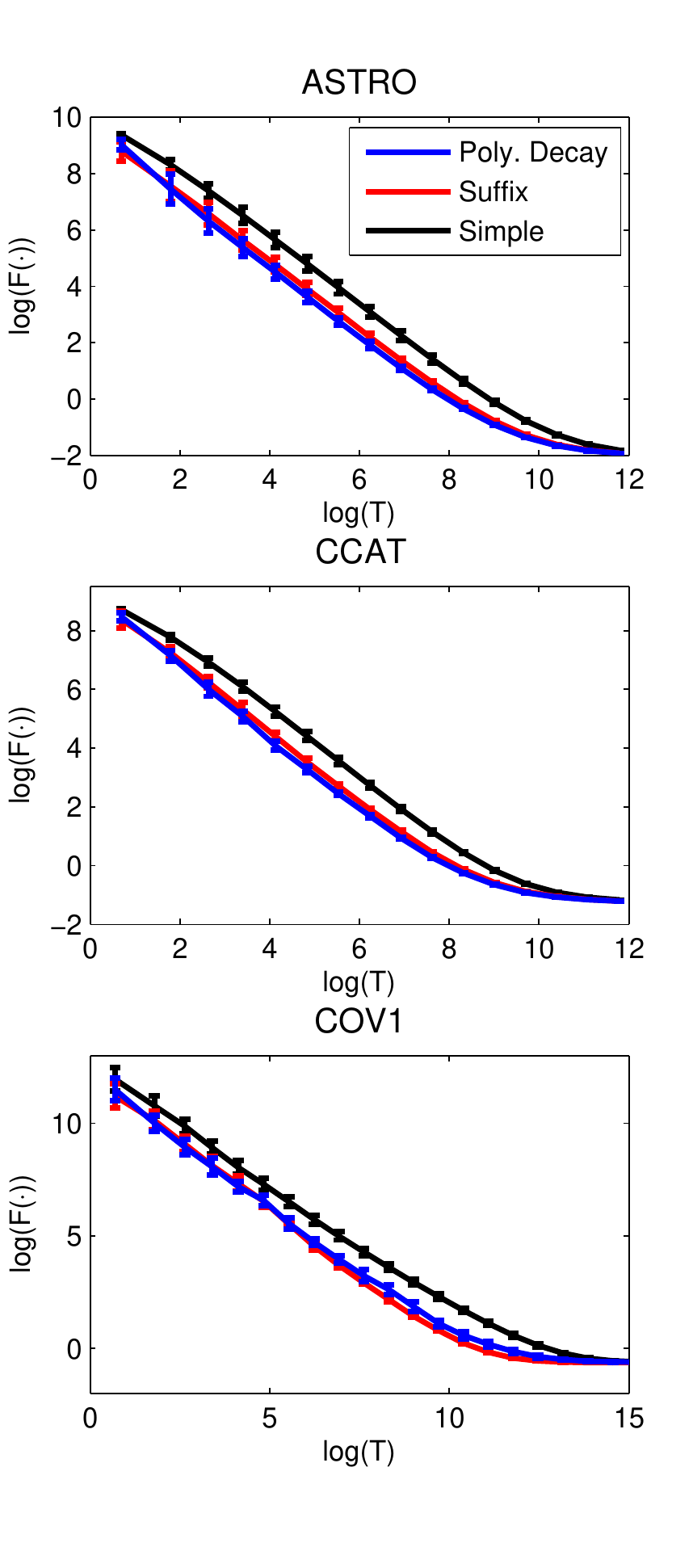}
\end{center}
\vskip -0.35in
\end{figure}
This question is important, because running SGD for $T$ iterations, and returning the last iterate $\bw_T$, is a very common heuristic. If the $\Ocal(\log(T)/T)$ bound is tight, it means practitioners should \emph{not} return the last iterate, since better $\Ocal(1/T)$ rates can be obtained by suffix averaging or polynomial-decay averaging. Alternatively, a $\Ocal(1/T)$ bound on the last iterate can indicate that returning the last iterate is indeed justified. For a further discussion of this, see \cite{shamir12}. Another question is whether high-probability versions of our individual iterate bounds (\thmref{thm:strlast} and \thmref{thm:convlast}) can be obtained, especially in the strongly-convex case. Again, this question has practical implications, since if a high-probability bound does not hold, it might imply that the last iterate can suffer from high variability, and should be used with caution. Finally, the tightness of \thmref{thm:convlast} is still unclear. In fact, even for the simpler case of (non-stocahstic) gradient descent, we do not know whether the behavior of the last iterate proved in \thmref{thm:convlast} is tight. In general, for an algorithm as simple and popular as SGD, we should have a better understanding of how it behaves and how it should be used in an optimal way.

\textbf{Acknowledgements:} We thank Simon Lacoste-Julien for helpful comments.

\bibliography{mybib}
\bibliographystyle{icml2013}

\end{document}